\documentclass[letterpaper, 10 pt, conference]{ieeeconf}  
\usepackage{graphicx}
\usepackage{cite}
\usepackage{amsmath}
\usepackage{amssymb}
\usepackage{mathtools}
\usepackage{upgreek}
\usepackage{multirow} 
\usepackage{threeparttable}
\usepackage{booktabs}
\usepackage[caption=false,font=footnotesize]{subfig}
\usepackage{array}

\usepackage{amsthm}

\theoremstyle{plain}
\newtheorem{theorem}{Theorem}
\newtheorem{lemma}{Lemma}

\theoremstyle{definition}
\newtheorem{definition}{Definition}

\theoremstyle{remark}
\newtheorem{remark}{Remark}

\makeatletter
\g@addto@macro\thm@space@setup{%
  \thm@preskip=2pt
  \thm@postskip=2pt
}
\makeatother

\IEEEoverridecommandlockouts                              

\title{\LARGE \bf
A Feasibility-Enhanced Control Barrier Function Method for  Multi-UAV Collision Avoidance
}

\author{Qishen Zhong, Junlong Wu, Jian Yang$^*$, Guanwei Xiao, Junqi Wu, Zimeng Jiang and Pingan Fang
\thanks{* Corresponding Author.}
\thanks{All Authors  are with the School of Automation Science and Engineering, South China University of Technology,
Guangzhou 510641, Guangdong, China}%
}

\begin{document}

\maketitle
\thispagestyle{empty}
\pagestyle{empty}

\begin{abstract}

This paper presents a feasibility-enhanced control barrier function (FECBF) framework for multi-UAV collision avoidance.
In dense multi-UAV scenarios, the feasibility of the CBF quadratic program (CBF-QP) can be compromised due to internal incompatibility among multiple CBF constraints. To address this issue, we analyze the internal compatibility of CBF constraints and derive a sufficient condition for internal compatibility.
Based on this condition, a sign-consistency constraint is introduced to mitigate internal incompatibility.
The proposed constraint is incorporated into a decentralized CBF-QP formulation using worst-case estimates and slack variables.
Simulation results demonstrate that the proposed method significantly reduces infeasibility and improves collision avoidance performance compared with existing baselines in dense scenarios. Additional simulations under varying time delays demonstrate the robustness of the proposed method.
Real-world experiments validate the practical applicability of the proposed method.

\end{abstract}

\section{INTRODUCTION}
Multi-UAV systems have been widely employed in various applications, including search and rescue\cite{ge2025multi}, inspection\cite{cao2025cooperative}, agriculture\cite{zhang2025fast}, and cargo delivery~\cite{cattai2025multi}. 
When multiple UAVs fly simultaneously in shared airspace, the risk of inter-UAV collisions poses a major challenge to operational safety, making effective collision avoidance a fundamental requirement in such systems.

Various approaches have been proposed to address multi-UAV collision avoidance, including geometry-based methods~\cite{yang2025geometry, qin2023srl, arul2021v}, artificial potential field methods~\cite{yang2023collision, qian2025collision}, and deep reinforcement learning methods~\cite{zhong20253d, han2025deep, kuo2025deep}.
However, while these methods have demonstrated effectiveness, establishing formal safety guarantees for them remains nontrivial, which may restrict their use in strictly safety-critical settings. Control barrier function (CBF)-based methods~\cite{huang2025dynamic, liu2025safety, huriot2025safe} provide a principled framework for enforcing safety constraints with formal guarantees, and have therefore attracted increasing attention.

Despite their appealing theoretical properties, existing CBF-based methods still face significant feasibility challenges in dense multi-UAV scenarios. As the number of UAVs increases, a large number of safety constraints must be satisfied simultaneously, which may cause the feasible control set to shrink or even become empty, especially in large-scale collision avoidance scenarios. In such cases, the underlying CBF quadratic program (CBF-QP) becomes infeasible, and no admissible control input can be obtained.

Several studies have investigated feasibility related issues about CBF-based methods. 
Xiao \textit{et al.}~\cite{xiao2022sufficient} derived sufficient conditions to ensure the compatibility between CBF constraints and control bounds, thereby guaranteeing per-step feasibility under bounded inputs. 
Their analysis implicitly assumes that multiple CBF constraints are internally compatible.
Isaly \textit{et al.}~\cite{isaly2024feasibility} developed a unified theoretical framework for analyzing feasibility and continuity of optimization-based feedback controllers. It provides tools to certify feasibility but without modifying the underlying constraints to improve it. Neither of these approaches explicitly improves the internal compatibility among multiple CBF constraints.

In multi-UAV collision avoidance problem, the internal compatibility among multiple CBF constraints becomes significantly more difficult to satisfy, and it is a prerequisite for feasibility. 
If multiple CBF constraints are mutually incompatible, ensuring compatibility with control bounds or certifying feasibility loses practical relevance, since no admissible control input exists regardless of the available control authority. 
Therefore, improving the internal compatibility among CBF constraints is essential for enhancing feasibility in dense multi-UAV scenarios, but remains insufficiently addressed by existing CBF-based approaches.

\begin{figure}[!tb] 
  \centering
  
    \includegraphics[width=0.47\textwidth]{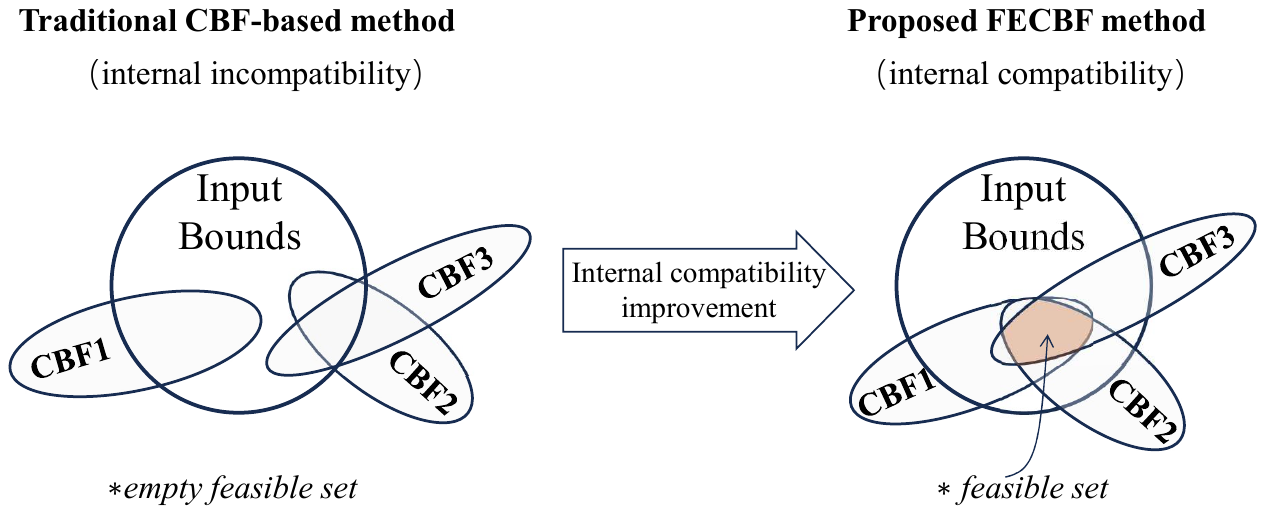}
  
  \caption{Illustration of internal compatibility and incompatibility among multiple CBF constraints in multi-UAV collision avoidance. Compatibility refers to the existence of at least one control input that simultaneously satisfies all CBF constraints within the admissible input bounds. Internal compatibility specifically denotes the mutual consistency of the CBF constraints themselves, meaning that their intersection is non-empty without considering the admissible input bounds. Each ellipse represents a CBF constraint, while the circle denotes the admissible input bounds. }
  \label{fig:intro}

\end{figure}

This letter proposes a feasibility-enhanced CBF (FECBF) method for multi-UAV collision avoidance, aiming to improve the feasibility of CBF-based safety controllers. By explicitly examining the internal compatibility among multiple CBF constraints, the proposed method addresses a key prerequisite for the feasibility of the CBF-QP, as illustrated in Fig.~\ref{fig:intro}. Specifically, we first investigate the structural conditions under which multiple CBF constraints become internally incompatible and derive a sufficient condition for compatibility. Based on this condition, we further design a sign-consistency constraint that can be incorporated into the CBF-QP to guide the control inputs toward satisfying the compatibility condition. 
The resulting formulation enhances the feasibility of the CBF-QP while preserving the safety guarantees of CBFs, and can be implemented in decentralized multi-UAV systems using only local interaction information. The main contributions of this work are summarized as follows:
\begin{itemize}
\item We analyze the internal compatibility among multiple CBF constraints and derive a sufficient condition for internal compatibility.
Since internal compatibility is a necessary prerequisite for the feasibility of the CBF-QP, satisfying the proposed condition directly improves feasibility and effectively alleviates infeasibility in practice, even without explicitly modeling the relationship between the CBF constraints and the control bounds.
\item Based on the derived sufficient condition, we design a sign-consistency constraint for the CBF-QP that guides the control inputs toward satisfying this condition. This constraint enables the proposed feasibility-enhancing mechanism to be implemented in decentralized multi-UAV systems using local interaction information.
\end{itemize}

The rest of this letter is organized as follows: Section \ref{sepre} presents the preliminaries and problem formulation. Section \ref{met} elaborates on the proposed method. Section \ref{se3} presents numerical simulations and real-world experiments. Finally, Section \ref{se5} concludes the letter.

\section{PRELIMINARY}
\label{sepre}
\subsection{Notations}
In this subsection, we summarize the main notations used in this letter.
Let $\mathbb{Z}_{\ge 0}$ and $\mathbb{Z}_{>0}$ denote the sets of nonnegative integers and positive integers, respectively.
For $n \in \mathbb{Z}_{>0}$, the notation $[n] := \{1,\ldots,n\}$ denotes an index set of size $n$, and we define $[0] := \emptyset$.
The symbol $\lVert \cdot \rVert$ denotes the Euclidean norm of a vector. Column vectors are denoted by lowercase boldface letters such as $\mathbf{x}$, while matrices are denoted by uppercase boldface letters such as $\mathbf{W}$. The superscript $\top$ denotes the transpose operation. For two vectors $\mathbf{x}$ and $\mathbf{y}$, the relation $\mathbf{x} \ge \mathbf{y}$ means that each element of $\mathbf{x}$ is greater than or equal to the corresponding element of $\mathbf{y}$, and $\mathbf{x} \neq \mathbf{y}$ means that at least one element of $\mathbf{x}$ is different from the corresponding element of $\mathbf{y}$. $\widetilde{\mathbf{x}}$ denotes the normalization of vector $\mathbf{x}$.

\subsection{Control Barrier Function}
Consider a class of first-order nonlinear control-affine systems described by
\begin{equation}
\label{Eq1} 
\dot{\mathbf{x}} = f(\mathbf{x}) + g(\mathbf{x})\mathbf{u},
\end{equation}
where $\mathbf{x} \in \mathcal{X} \subset \mathbb{R}^n$ is the state, 
$\mathbf{u} \in \mathcal{U} \subset \mathbb{R}^m$ is the control input, 
$f : \mathcal{X} \to \mathbb{R}^n$ and $g : \mathcal{X} \to \mathbb{R}^{n \times m}$ 
are locally Lipschitz continuous functions. 
The safe set $\mathcal{C} \subset \mathcal{X}$ is defined as the set of states that should remain safe and is characterized by a continuously differentiable function $h : \mathcal{X} \to \mathbb{R}$ as
\begin{align}
\label{Eq2} 
\mathcal{C} \coloneqq \{\mathbf{x} \in \mathcal{X} :h(\mathbf{x})\geq0\},
\quad\partial\mathcal{C} \coloneqq \{\mathbf{x} \in \mathcal{X}:h(\mathbf{x})=0\}.
\end{align}
where $\partial\mathcal{C}$ denotes the boundary of $\mathcal{C}$.
\begin{definition}[CBF,  \cite{ames2016control}]
Given the system \eqref{Eq1} and the safe set \eqref{Eq2}, the function $h$ is a CBF if $\nabla h(\mathbf{x})\neq0$ for all $\mathbf{x}\in\partial\mathcal{C}$ and there exists an extended class \( \mathcal{K}_{\infty} \) function \( \alpha \) such that 
\begin{equation}
\underset{\mathbf{u}\in\mathcal{U}}{\sup}[L_f h(\mathbf{x}) + L_g h(\mathbf{x}) \mathbf{u} + \alpha(h(\mathbf{x}))] \geq 0,
\label{Eq3}
\end{equation}
for all $\mathbf{x}$ in an open set $\mathcal{D}\supset\mathcal{C}$. \({L}_f h\) and \({L}_g h\) are the Lie derivatives along the flows of \(f(\mathbf{x})\) and \(g(\mathbf{x})\), respectively. 
\end{definition}
\begin{lemma}
Given a control barrier function $h$, any locally Lipschitz continuous controller $k : \mathcal{X} \to \mathbb{R}^m$ such that the control input $\mathbf{u} = k(\mathbf{x})$ satisfies
\begin{equation}
\dot h(\mathbf x)+\alpha(h(\mathbf x))\ge 0,\quad \forall\,\mathbf x\in\mathcal C,
\label{eq:cbf_condition}
\end{equation}
renders the set $\mathcal{C}$ forward invariant, where 
\begin{equation}
\dot h(\mathbf x):= L_f h(\mathbf x)+L_g h(\mathbf x)k(\mathbf x),
\label{eq:cbf_condition2}
\end{equation}
denotes the time derivative of $h$ along the closed-loop system trajectories.
\end{lemma}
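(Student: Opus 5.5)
The argument is the standard comparison-lemma proof of forward invariance, combined with Nagumo-type reasoning at the boundary.

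\textbf{Step 1 (solutions exist and are unique).} Since $f$, $g$ and $k$ are locally Lipschitz, the closed-loop vector field $F(\mathbf{x}) := f(\mathbf{x}) + g(\mathbf{x})k(\mathbf{x})$ is locally Lipschitz. Hence, for every initial condition $\mathbf{x}_0 \in \mathcal{C}$, there is a unique solution $\mathbf{x}(t)$ on a maximal interval $[0,\tau_{\max})$.

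\textbf{Step 2 (reduce to a scalar inequality).} Set $y(t) := h(\mathbf{x}(t))$. Because $h$ is continuously differentiable, $y$ is $C^1$, and by \eqref{eq:cbf_condition2} we have $\dot y(t) = L_f h(\mathbf{x}(t)) + L_g h(\mathbf{x}(t))k(\mathbf{x}(t))$. There is one subtlety: the hypothesis \eqref{eq:cbf_condition} holds only on $\mathcal{C}$, not on an open neighbourhood of it. The argument must therefore avoid using the inequality outside $\mathcal{C}$, so I would argue by contradiction rather than integrate the inequality directly.

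\textbf{Step 3 (contradiction at the first exit).} Suppose the trajectory leaves $\mathcal{C}$, i.e.\ $y(t_1) < 0$ for some $t_1$. Let $t^* := \sup\{t \le t_1 : y(t) \ge 0\}$. Then $y(t^*) = 0$, $y(t) < 0$ on $(t^*, t_1]$, and $y \ge 0$ just before $t^*$.

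\begin{itemize}
\item If $h$ is a CBF on the open set $\mathcal{D} \supset \mathcal{C}$ with $k$ satisfying the inequality there (the usual setting of \cite{ames2016control}), then for $t \in [t^*, t_1]$ we have $\dot y \ge -\alpha(y)$. The comparison lemma, with the unique solution $z \equiv 0$ of $\dot z = -\alpha(z)$, $z(t^*) = 0$, gives $y(t) \ge 0$. (Uniqueness of this comparison solution requires $\alpha$ to be locally Lipschitz; otherwise one takes the minimal solution, which is still $0$ because $\alpha$ is class $\mathcal{K}_\infty$ and $\alpha(0) = 0$.) This contradicts $y(t_1) < 0$.
\item If the inequality is known only on $\mathcal{C}$, I would instead invoke Nagumo's theorem. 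On $\partial\mathcal{C}$ we have $\nabla h \neq 0$, so $\mathcal{C}$ is a regular closed set. There, $\dot h = \nabla h \cdot F \ge -\alpha(0) = 0$, so $F(\mathbf{x})$ lies in the tangent (Bouligand) cone of $\mathcal{C}$ at each boundary point. Uniqueness of solutions from Step 1 then yields forward invariance.
\end{itemize}

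\textbf{Step 4 (extend to the maximal interval).} The contradiction holds for every $t_1 < \tau_{\max}$, so $\mathbf{x}(t) \in \mathcal{C}$ on the whole maximal interval, which is the required forward invariance.

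\textbf{Main obstacle.} The delicate point is Step 3: the inequality is assumed only on $\mathcal{C}$, and boundary behaviour in which $\dot y = 0$ can still permit exit if solutions are non-unique. I would handle this with the Nagumo route, relying on local Lipschitzness of $F$ from Step 1 for uniqueness and on $\nabla h \neq 0$ on $\partial\mathcal{C}$ for regularity of the tangent cone.
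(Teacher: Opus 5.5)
Your proposal is correct and follows the standard argument. The paper states this lemma without proof, importing it from the CBF literature (Ames et al.~\cite{ames2016control}), and the usual justification is your Nagumo route: $\nabla h\neq 0$ on $\partial\mathcal{C}$ makes the Bouligand cone at a boundary point the half-space $\{\mathbf{v}:\nabla h(\mathbf{x})^\top\mathbf{v}\ge 0\}$, the hypothesis gives $\dot h\ge-\alpha(0)=0$ there, and uniqueness of solutions for the locally Lipschitz closed-loop field then yields invariance. (Your comparison-lemma bullet applies only under the stronger hypothesis on an open $\mathcal{D}\supset\mathcal{C}$; even there you must shrink $t_1$ toward $t^*$ so the trajectory stays in $\mathcal{D}$, after which the mean value theorem with $-\alpha(y)>0$ for $y<0$ gives the contradiction without any comparison lemma.)
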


Forward invariance means that for any initial state $\mathbf{x}_0\coloneqq \mathbf{x}(t_0) \in \mathcal{C}$, the trajectory $\mathbf{x}(t)$ remains in $\mathcal{C}$ for $t\geq t_0$ under the input $\mathbf{u}=k(\mathbf{x})$.
\subsection{Unmanned Aerial Vehicle Kinematic Model}
Consider $n$ UAVs operating in a shared Three-Dimensional (3-D) environment, where A$_i$ denotes the $i$-th UAV, $i\in[n]$. The position and velocity of A$_i$ with respect to the global inertial frame $\rm O-XYZ$ are denoted as $\mathbf{p}_i=\left[p_{ix}, p_{iy}, p_{iz}\right]^{\top}$ and $\mathbf{v}_i=\left[v_{ix}, v_{iy}, v_{iz}\right]^{\top}$, respectively. The UAV kinematics are given by
\begin{align}
\begin{split}
\label{Eq5} 
\begin{cases}
\dot{p}_{ix}=v_{ix}=v_i\cos\theta_i\cos\psi_i\\
\dot{p}_{iy}=v_{iy}=v_i\cos\theta_i\sin\psi_i \\
\dot{p}_{iz}=v_{iz}=v_i\sin\theta_i\\
\dot{v}_{i}=a_i, \quad 
\dot{\theta}_{i}=\gamma_i,\quad
\dot{\psi}_{i}=\omega_i
\end{cases},
\end{split}
\end{align}
where $v_i\coloneqq \lVert \mathbf{v}_i \rVert$ is the speed, $\theta_i$ is the pitch angle, $\psi_i$ is the yaw angle, $a_i$ is the linear acceleration, $\gamma_i$ and $\omega_i$ are the angular velocities of pitch and yaw, respectively. 

To account for the practical maneuverability, the input and state constraints can be expressed as
\begin{align}
\begin{split}
\label{Eq6} 
\begin{cases}
\theta^{\rm min}_i\leq \theta_i\leq\theta^{\rm max}_i, \quad\quad\psi^{\rm min}_i\leq \psi_i\leq \psi^{\rm max}_i\\
v^{\rm min}_i\leq v_i\leq v^{\rm max}_i, \quad\quad a^{\rm min}_i\leq a_i\leq a^{\rm max}_i \\
\gamma^{\rm min}_i\leq \gamma_i\leq \gamma^{\rm max}_i, \quad\quad\omega^{\rm min}_i\leq \omega_i\leq \omega^{\rm max}_i\\
\end{cases},
\end{split}
\end{align} 
where $(\cdot)^{\rm min}$ and $(\cdot)^{\rm max}$ denote the lower and upper bounds of the corresponding variables. 

\subsection{Problem Formulation}

To guarantee navigation safety, each UAV A$_i$ is required to maintain a safety region with radius $r_i$ around its position.
The positions of A$_i$ and A$_j$ are denoted by $\mathbf{p}_i, \mathbf{p}_j \in \mathbb{R}^3$.
From the perspective of A$_i$, the collision avoidance requirement is expressed as
\begin{equation}
\label{Eq7}
\|\mathbf{p}_i-\mathbf{p}_j\| > r_{i}, \quad \forall j\in[n]\backslash\{i\}.
\end{equation}

In this letter, we aim to design a decentralized control strategy for each UAV A$_i$ such that the constraint \eqref{Eq7} is satisfied using its local control input $\mathbf{u}_i=[a_i,\gamma_i,\omega_i]^{\top}$ during the navigation task.

\section{METHOD}
\label{met}
In this section, we first formulate a centralized CBF-QP model for multi-UAV collision avoidance in 3-D environments. This formulation is used to analyze the internal compatibility among multiple CBF constraints and identify key factors contributing to infeasibility. Motivated by this analysis, a sign-consistency constraint is proposed to improve compatibility. Finally, the proposed constraint is incorporated into a decentralized CBF-QP framework, where each UAV computes its collision avoidance control using only local state information, without requiring the control inputs of neighboring UAVs.

\subsection{Centralized CBF-QP}
Collision avoidance is ensured if each UAV satisfies the safety requirement \eqref{Eq7}. 
This requirement can be guaranteed by ensuring the forward invariance of the corresponding safe set via CBFs.
However, when $\mathbf{p}_i$ is used as the system state, the control input does not directly affect the state derivative, resulting in a relative degree larger than one with respect to distance-based safety constraints. 
As a consequence, standard CBF conditions cannot be directly applied.

To address this issue, we introduce a virtual state variable $\mathbf{s}_i:=\mathbf{p}_i + \zeta \mathbf{v}_i$, where $\zeta > 0$ is a design parameter. The virtual state dynamics are expressed as
\begin{subequations}
\label{Eq8}
\begin{align}
\dot{\mathbf{s}}_i &= \dot{\mathbf{p}}_i + \zeta \dot{\mathbf{v}}_i 
                   = \mathbf{v}_i + \zeta \mathbf{W}_i \mathbf{u}_i, \\
\mathbf{W}_i &= \mathbf{R}_i \operatorname{diag}(1,\, v_i,\, v_i),
\end{align}
\end{subequations}
where $\mathbf{W}_i$ is the kinematic Jacobian matrix and $\mathbf{R}_i$ is the rotation matrix given by

\begin{align}
\label{Eq9}
\mathbf{R}_i =
\begin{bmatrix}
\cos\theta_i\cos\psi_i & -\sin\theta_i\cos\psi_i & -\cos\theta_i\sin\psi_i \\
\cos\theta_i\sin\psi_i & -\sin\theta_i\sin\psi_i & \cos\theta_i\cos\psi_i \\
\sin\theta_i           & \cos\theta_i            & 0
\end{bmatrix}.
\end{align}

With the virtual state dynamics in \eqref{Eq8}, the resulting system can be written in a first-order control-affine form as in \eqref{Eq1}, which enables the direct construction of CBF constraints.

For any two UAVs A$_i$ and A$_j$, a safe set $\mathcal{C}_{i,j} \coloneqq \{(\mathbf{s}_i,\mathbf{s}_j)\mid h_{i,j}(\mathbf{s}_i,\mathbf{s}_j)\geq0\}$ is defined using the CBF
\begin{align}
\label{Eq10} h_{i,j}=\lVert \mathbf{s}_i-\mathbf{s}_j \rVert^2-\left(d_{i,j}\right)^2,
\end{align}
where $d_{i,j}=r_i+r_j+\zeta(v_i+v_j)$. The definition of $d_{i,j}$ introduces a geometric safety margin beyond the physical size of each vehicle. By substituting Eq.~\eqref{Eq10} into \eqref{eq:cbf_condition}, the pairwise CBF constraint for two UAVs is expressed as
\begin{align}
\begin{split}
\label{Eq11} \dot{h}_{i,j}+\alpha(h_{i,j})=\mathbf{k}^{\top}_{i,j}\mathbf{u}_i+\mathbf{k}^{\top}_{j,i}\mathbf{u}_j+\xi_
{i,j}\geq0, 
\end{split}
\end{align}
where $\xi_{i,j}$ collects all terms independent of the control inputs, and
\begin{align}
  \begin{split}
\label{Eq12} 
    \mathbf{k}_{i,j}^\top &= 2\zeta (\mathbf{s}_i - \mathbf{s}_j)^\top \mathbf{W}_i, 
    \quad\mathbf{k}_{j,i}^\top = 2\zeta (\mathbf{s}_j - \mathbf{s}_i)^\top \mathbf{W}_j, \\
    \xi_{i,j} &= 2(\mathbf{s}_i - \mathbf{s}_j)^\top (\mathbf{v}_i - \mathbf{v}_j) + \alpha(h_{i,j}). \\
    \alpha(h_{i,j}) &= \kappa h_{i,j}, \quad \kappa>0.
  \end{split}
\end{align}

\begin{remark}[\textit{Velocity-dependent safety margin}]
A$_i$ is assumed to occupy a safety region of radius $r_i$ as defined in \eqref{Eq7}.
In the CBF formulation, collision avoidance is enforced by maintaining a separation distance
no smaller than $r_i+r_j$ between two UAVs.
On top of this geometric margin, the safety distance $d_{i,j}$ further incorporates
velocity-dependent terms to account for braking capability and maneuverability limits.
Although $d_{i,j}$ depends on the velocities $v_i$ and $v_j$, its time derivative is not
explicitly included in the CBF constraint \eqref{Eq11}.
This design preserves modeling simplicity and analytical tractability.
The velocity-dependent component is therefore treated as an additional conservative margin
rather than a dynamic state variable.
Moreover, the presence of the geometric safety margin ensures that bounded variations in the
velocity-dependent term do not compromise the underlying collision-free requirement.
\end{remark}

In multi-UAV scenarios, multiple pairwise CBF constraints \eqref{Eq11} are imposed to ensure safety for every pair of UAVs. Based on these constraints, we formulate a centralized CBF-QP to compute collision-free control inputs for all UAVs. Define $\mathbf{u}^{p}_i\in\mathbb{R}^{3}$ as the nominal control input according to the navigation objective  of A$_i$. With $\mathbf{u}=[\mathbf{u}^{\top}_1,\dots,\mathbf{u}^{\top}_n]^{\top}$ and $\mathbf{u}^{p}=[\mathbf{u}^{p \top}_1,\dots,\mathbf{u}^{p \top}_n]^{\top}$, the CBF-QP is expressed as 
\begin{subequations}
\label{Eq15} 
\begin{align}
\mathop{\arg \min}_{\mathbf{u}} & \qquad \lVert\mathbf{u}-\mathbf{u}^{p} \rVert^2 \label{Eq15a}\\
\rm {s.t.}  & \quad \mathbf{u}^{\min} \leq \mathbf{u} \leq \mathbf{u}^{\max},\label{Eq15b}\\
  & \qquad\quad \mathbf{C}\mathbf{u} \leq\mathbf{b} \label{Eq15c}, 
\end{align}
\end{subequations}
where $\mathbf{u}^{\min}, \mathbf{u}^{\max}\in \mathbb{R}^{3n}$ denote the lower and upper bounds of the control inputs for all UAVs.  The matrix $\mathbf{C}\in\mathbb{R}^{\frac{n(n-1)}{2}\times3n}$ and the vector $\mathbf{b}\in\mathbb{R}^{\frac{n(n-1)}{2}}$ are constructed by stacking all pairwise CBF constraints in \eqref{Eq16}.

\begin{align}
\begin{split}
\label{Eq16} 
\mathbf{C}=&\begin{bmatrix}
  \mathbf{k}^{\top}_{1,2} & \mathbf{k}^{\top}_{2,1} & \mathbf{0}^\top & \mathbf{\cdots} & \mathbf{0}^\top & \mathbf{0}^\top\\
  \mathbf{k}^{\top}_{1,3} & \mathbf{0}^\top & \mathbf{k}^{\top}_{3,1} & \cdots & \mathbf{0}^\top &\mathbf{0}^\top\\
  \vdots & \vdots & \vdots & \ddots & \vdots &\vdots\\
  \mathbf{0}^\top & \mathbf{k}^{\top}_{2,3} & \mathbf{k}^{\top}_{3,2} & \cdots & \mathbf{0}^\top &\mathbf{0}^\top\\ 
  \vdots & \vdots & \vdots & \ddots & \vdots &\vdots\\
  \mathbf{0}^\top & \mathbf{0}^\top & \mathbf{0}^\top & \cdots & \mathbf{k}^{\top}_{n-1,n} &\mathbf{k}^{\top}_{n,n-1}
\end{bmatrix}\\
\mathbf{b}=&\begin{bmatrix}
  \xi_{1,2} & \xi_{1,3} & \cdots & \xi_{2,3} & \cdots & \xi_{n-1,n}
\end{bmatrix}^\top
\end{split}.
\end{align}

\subsection{Feasibility Analysis and Sign-Consistency Constraint}
As shown in the previous subsection, we could obtain the collision avoidance solutions by solving the centralized CBF-QP as defined in \eqref{Eq15}.
However, if the feasible set becomes empty, the constraints in \eqref{Eq15} cannot be satisfied simultaneously, and collision avoidance cannot be guaranteed.

The feasibility of the CBF-QP depends on two types of compatibility: 
internal compatibility among the CBF constraints, 
and overall compatibility considering both the CBF constraints and the input bounds. 
Internal compatibility means that there exists a control input $\mathbf u$ satisfying \eqref{Eq15c}, 
while overall compatibility requires that there exists a control input $\mathbf u$ satisfying both \eqref{Eq15b} and \eqref{Eq15c}.
Internal compatibility is a necessary condition for feasibility.
If it is violated, the CBF-QP becomes infeasible regardless of the admissible input range. When both types of compatibility are satisfied, they jointly provide a sufficient condition for feasibility. The intersection of the CBF constraints is likely to become empty when multiple UAVs fly in dense  environments, which is a fundamental bottleneck limiting the feasibility of the CBF-QP. Therefore, we focus on improving the internal compatibility of the CBF constraints.

By introducing slack variables, the classical Farkas' lemma \cite{bertsimas1997introduction} is extended to Lemma \ref{the2}, which provides a criterion for determining the internal compatibility of the CBF constraints.
\begin{lemma}
  \label{the2}
Given the CBF constraint \eqref{Eq15c}, exactly one of the following two alternatives holds:

(a) There exists a $\mathbf{u}$ such that $\mathbf{Cu}\leq\mathbf{b}$.

(b) There exists a vector $\mathbf{q}\geq\mathbf{0}$ such that $\mathbf{q}^\top\mathbf{C}=\mathbf{0}^\top$ and $\mathbf{q}^\top\mathbf{b}<0$.
\end{lemma}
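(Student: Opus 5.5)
The plan is to reduce Lemma~\ref{the2} to the standard form of Farkas' lemma, which states that for a matrix $\mathbf{A}$ and vector $\mathbf{b}$, exactly one of the following holds: (i) there exists $\mathbf{y}\ge\mathbf{0}$ with $\mathbf{A}\mathbf{y}=\mathbf{b}$; or (ii) there exists $\mathbf{q}$ with $\mathbf{q}^\top\mathbf{A}\ge\mathbf{0}^\top$ and $\mathbf{q}^\top\mathbf{b}<0$. The reduction uses slack variables and a split of the free variable $\mathbf{u}$ into nonnegative parts. Write $\mathbf{u}=\mathbf{u}^+-\mathbf{u}^-$ with $\mathbf{u}^\pm\ge\mathbf{0}$, and introduce a slack vector $\mathbf{z}\ge\mathbf{0}$ so that $\mathbf{C}\mathbf{u}\le\mathbf{b}$ becomes $\mathbf{C}\mathbf{u}^+-\mathbf{C}\mathbf{u}^-+\mathbf{z}=\mathbf{b}$. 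Alternative (a) is then equivalent to the existence of $\mathbf{y}=[\mathbf{u}^{+\top},\mathbf{u}^{-\top},\mathbf{z}^\top]^\top\ge\mathbf{0}$ with $\mathbf{A}\mathbf{y}=\mathbf{b}$, where $\mathbf{A}=[\mathbf{C},\,-\mathbf{C},\,\mathbf{I}]$. Both directions of this equivalence are immediate: any $\mathbf{u}$ splits into its positive and negative parts with $\mathbf{z}=\mathbf{b}-\mathbf{C}\mathbf{u}$, and conversely any such $\mathbf{y}$ gives back a feasible $\mathbf{u}$.

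Next, apply Farkas' lemma to the pair $(\mathbf{A},\mathbf{b})$. The alternative (ii) reads $\mathbf{q}^\top\mathbf{C}\ge\mathbf{0}^\top$, $-\mathbf{q}^\top\mathbf{C}\ge\mathbf{0}^\top$, $\mathbf{q}^\top\ge\mathbf{0}^\top$, and $\mathbf{q}^\top\mathbf{b}<0$. The first two inequalities together give $\mathbf{q}^\top\mathbf{C}=\mathbf{0}^\top$, so (ii) is exactly alternative (b). Hence exactly one of (a) and (b) holds.

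As a sanity check, and to make the ``not both'' direction transparent without citing the theorem, note that if (a) and (b) both held, then $0=\mathbf{q}^\top\mathbf{C}\mathbf{u}\le\mathbf{q}^\top\mathbf{b}<0$, using $\mathbf{q}\ge\mathbf{0}$. This is a contradiction.

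The substantive direction is that the failure of (a) implies (b), and this rests on Farkas' lemma itself, that is, on a separating hyperplane argument for the closed convex cone $\{\mathbf{A}\mathbf{y}:\mathbf{y}\ge\mathbf{0}\}$. I will take this from \cite{bertsimas1997introduction} rather than reprove it. The main care point is the bookkeeping in the reformulation: the equality form must encode both the free sign of $\mathbf{u}$ and the inequality, and the resulting dual conditions must collapse correctly to $\mathbf{q}^\top\mathbf{C}=\mathbf{0}^\top$ and $\mathbf{q}\ge\mathbf{0}$.
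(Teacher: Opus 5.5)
Your proof is correct and follows the paper's approach. The paper gives no separate proof; it only remarks that Lemma~\ref{the2} is the classical Farkas' lemma of \cite{bertsimas1997introduction} extended by introducing slack variables, and your reformulation via $\mathbf{u}=\mathbf{u}^+-\mathbf{u}^-$, slack $\mathbf{z}$, and $[\mathbf{C},\,-\mathbf{C},\,\mathbf{I}]$ fills in that remark correctly, with the dual conditions reducing to $\mathbf{q}\ge\mathbf{0}$, $\mathbf{q}^\top\mathbf{C}=\mathbf{0}^\top$, $\mathbf{q}^\top\mathbf{b}<0$ (your direct ``not both'' check is a useful addition the paper omits).
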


According to Lemma~\ref{the2}, the CBF constraints are incompatible if and only if case~(b) holds. Using this compatibility criterion, Lemma~\ref{pro1} further establishes the relationship between internal compatibility and the UAV count $n$. 

\begin{lemma}
\label{pro1}
For $n \ge 7$, the dimension of the left nullspace of $\mathbf{C}$ admits a quadratic lower bound in $n$, which increases the potential for internal incompatibility among the CBF constraints.
\end{lemma}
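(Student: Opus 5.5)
The plan is a rank–nullity argument. The matrix $\mathbf{C}$ has $N:=\frac{n(n-1)}{2}$ rows and $3n$ columns, so $\operatorname{rank}(\mathbf{C}) \le 3n$. Its left nullspace is $\{\mathbf{q}\in\mathbb{R}^{N} : \mathbf{q}^\top\mathbf{C}=\mathbf{0}^\top\}$, and by rank–nullity its dimension satisfies
\begin{equation*}
\dim \mathcal{N}(\mathbf{C}^\top) = N - \operatorname{rank}(\mathbf{C}) \ge \frac{n(n-1)}{2} - 3n = \frac{n(n-7)}{2}.
\end{equation*}
This bound is positive exactly when $n>7$. For $n\ge 7$ it is nonnegative and grows quadratically in $n$, which is the claimed lower bound. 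We could sharpen it slightly by noting that each UAV's block column $\mathbf{k}_{i,j}^\top$ has three entries but rank at most three, but this is unnecessary. So the first step is just to count rows and columns of \eqref{Eq16} and apply rank–nullity, independent of the particular entries of $\mathbf{C}$.

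The second step interprets the bound through Lemma~\ref{the2}. Incompatibility, alternative (b), requires a vector $\mathbf{q}\ge\mathbf{0}$ in the left nullspace with $\mathbf{q}^\top\mathbf{b}<0$. When the left nullspace is trivial, no such $\mathbf{q}\neq\mathbf{0}$ exists. Then (b) fails, and (a) holds automatically. As $\dim\mathcal{N}(\mathbf{C}^\top)$ grows like $n^2/2$, the subspace occupies an increasing fraction of $\mathbb{R}^N$ (dimension ratio tending to $1$). Its intersection with the nonnegative orthant therefore becomes more likely to be nontrivial, and with it the set of certificates $\mathbf{q}$ with $\mathbf{q}^\top\mathbf{b}<0$. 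I would make this precise only qualitatively, since "increases the potential" is not a sharp statement. A cleaner supporting fact is this: if $\mathbf{C}$ has full row rank, which requires $N\le 3n$, i.e.\ $n\le 7$, then $\mathbf{Cu}=\mathbf{b}$ is always solvable, so internal compatibility is guaranteed. For $n\ge 8$, full row rank is impossible.

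The main obstacle is that the quantitative statement only concerns the dimension of the nullspace. Turning that into an actual incompatibility requires the nullspace to meet the orthant nontrivially with a negative inner product with $\mathbf{b}$, which depends on the geometry of the $\mathbf{k}_{i,j}$ and the values $\xi_{i,j}$. I would not attempt a probabilistic or geometric proof of that. I would state it as a heuristic consequence and keep the rigorous content to the rank–nullity bound together with the $n\le 7$ full-row-rank compatibility remark.
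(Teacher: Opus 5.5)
Your proof is the paper's: the same rank--nullity count $\dim\mathcal{N}(\mathbf{C}^\top)\ge\frac{n(n-1)}{2}-3n=\frac{n(n-7)}{2}$, with the incompatibility clause left, as in the paper, as a qualitative reading of alternative~(b) of Lemma~\ref{the2}. Be careful with your side remarks, though, because row $(i,j)$ of $\mathbf{C}$ acts as $2\zeta(\mathbf{s}_i-\mathbf{s}_j)^\top(\mathbf{W}_i\mathbf{u}_i-\mathbf{W}_j\mathbf{u}_j)$: when the $\mathbf{W}_i$ are invertible, the rigid motions $\mathbf{W}_i\mathbf{u}_i=\mathbf{c}+\boldsymbol{\omega}\times\mathbf{s}_i$ lie in $\ker\mathbf{C}$ (so $\operatorname{rank}(\mathbf{C})\le 3n-6$ for non-collinear $\mathbf{s}_i$, and full row rank already fails for $n\ge 5$, not just $n\ge 8$), and the dilation $\mathbf{W}_i\mathbf{u}_i=\mathbf{s}_i$ makes every entry of $\mathbf{C}\mathbf{u}$ equal to $2\zeta\|\mathbf{s}_i-\mathbf{s}_j\|^2>0$, so on the safe set no nonzero $\mathbf{q}\ge\mathbf{0}$ satisfies $\mathbf{q}^\top\mathbf{C}=\mathbf{0}^\top$, and your heuristic (shared with the paper) that the growing nullspace becomes more likely to meet the nonnegative orthant is in fact false for this $\mathbf{C}$.
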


\begin{proof}
According to case~(b) of Lemma~\ref{the2}, the condition $\mathbf{q}^\top\mathbf{C} = \mathbf{0}^\top$ implies that $\mathbf{q}$ belongs to the left nullspace of $\mathbf{C}$, denoted by $\mathcal{N}(\mathbf{C}^\top)$. The dimension of $\mathcal{N}(\mathbf{C}^\top)$ is given by
\begin{align}
  \label{Eq17}
  \dim(\mathcal{N}(\mathbf{C}^\top)) = \frac{n(n-1)}{2} - \operatorname{rank}(\mathbf{C}).
\end{align}

For $n \ge 7$, we have $n(n-1)/2 \ge 3n$. Since $\mathbf{C} \in \mathbb{R}^{\frac{n(n-1)}{2} \times 3n}$, its rank satisfies $\operatorname{rank}(\mathbf{C}) \le 3n$, and hence
\begin{align}
  \label{Eq18}
  \frac{n(n-7)}{2} \le \dim(\mathcal{N}(\mathbf{C}^\top)) \le \frac{n(n-1)}{2}.
\end{align}

Both lower and upper bounds in \eqref{Eq18} grow quadratically with $n$, indicating that $\mathcal{N}(\mathbf{C}^\top)$ grows rapidly as the number of UAVs increases. Consequently, the set of $\mathbf{q}$ satisfying $\mathbf{q}^\top\mathbf{C}=\mathbf{0}^\top$ becomes increasingly large.

Case~(b) of Lemma~\ref{the2} further requires $\mathbf{q}\ge\mathbf{0}$ and $\mathbf{q}^\top\mathbf{b}<0$. Although these additional constraints restrict $\mathbf{q}$ to a structured subset of $\mathcal{N}(\mathbf{C}^\top)$, the rapid growth of the nullspace dimension enlarges the set of candidate directions that may simultaneously satisfy all conditions. Consequently, internal incompatibility among CBF constraints becomes more likely as $n$ increases, particularly in dense multi-UAV scenarios.
\end{proof}

Lemma~\ref{pro1} indicates that internal incompatibility among CBF constraints becomes increasingly difficult to avoid in large-scale multi-UAV scenarios. In such situations, the CBF-QP may become infeasible, making it difficult to obtain a control input that simultaneously satisfies all CBF constraints and thus limiting the practical applicability of the CBF framework in dense UAV environments. To enhance internal compatibility, we further analyze the feasibility condition in Lemma~\ref{the2} and derive a sufficient condition for internal compatibility. To this end, we reformulate the alternatives in Lemma~\ref{the2} and examine their implications.

\begin{lemma}
  \label{le1}
The CBF constraints \eqref{Eq15c} are compatible if and only if for all vectors $\mathbf{q} \ge \mathbf{0}$, 
at least one of the following holds:
$
\mathbf{q}^\top \mathbf{C} \ne \mathbf{0}^\top
 $ \text{or} $
\mathbf{q}^\top \mathbf{b} \ge 0.
$
\end{lemma}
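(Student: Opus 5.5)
This is essentially a logical restatement of Lemma~\ref{the2}. The plan is to negate alternative~(b) and use the ``exactly one'' dichotomy.

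First, by Lemma~\ref{the2}, exactly one of (a) and (b) holds. Hence (a) holds if and only if (b) fails. By definition, (a) is internal compatibility of \eqref{Eq15c}.

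Second, I will write out the negation of (b). Statement (b) reads: there exists $\mathbf{q}\ge\mathbf{0}$ with $\mathbf{q}^\top\mathbf{C}=\mathbf{0}^\top$ and $\mathbf{q}^\top\mathbf{b}<0$. Its negation is: for every $\mathbf{q}\ge\mathbf{0}$, the conjunction ``$\mathbf{q}^\top\mathbf{C}=\mathbf{0}^\top$ and $\mathbf{q}^\top\mathbf{b}<0$'' is false. By De Morgan's law this means $\mathbf{q}^\top\mathbf{C}\ne\mathbf{0}^\top$ or $\mathbf{q}^\top\mathbf{b}\ge 0$. Here $\ne$ is understood componentwise, as in the paper's notation, and $\ge$ is the negation of the strict scalar inequality. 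Chaining this equivalence with the first step gives the claim in both directions.

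The only point needing care is that Lemma~\ref{the2} genuinely gives mutual exclusivity as well as exhaustiveness, since the ``if and only if'' uses both. If one wanted a self-contained check of exclusivity, it is immediate. If $\mathbf{C}\mathbf{u}\le\mathbf{b}$ and $\mathbf{q}\ge\mathbf{0}$ with $\mathbf{q}^\top\mathbf{C}=\mathbf{0}^\top$, then
$$0=\mathbf{q}^\top\mathbf{C}\mathbf{u}\le\mathbf{q}^\top\mathbf{b},$$
which contradicts $\mathbf{q}^\top\mathbf{b}<0$. This direction, compatibility implies the stated condition for all $\mathbf{q}$, is therefore elementary. The converse rests on the Farkas-type exhaustiveness already supplied by Lemma~\ref{the2}, so I expect no real obstacle.
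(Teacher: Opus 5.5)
Your proposal is correct and matches the paper. The paper gives no separate proof; it presents Lemma~\ref{le1} as a reformulation of Lemma~\ref{the2}, obtained by negating alternative~(b) and using the exactly-one dichotomy, which is what you do (your De Morgan step correctly relies on the paper's convention that $\mathbf{q}^\top\mathbf{C}\ne\mathbf{0}^\top$ means at least one entry is nonzero, not that every entry is), and your direct exclusivity check $0=\mathbf{q}^\top\mathbf{C}\mathbf{u}\le\mathbf{q}^\top\mathbf{b}$ is a valid but optional addition.
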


To enhance compatibility, we aim to prevent the existence of $\mathbf{q}\ge\mathbf{0}$ that simultaneously satisfies $\mathbf{q}^\top\mathbf{C}=\mathbf{0}^\top$ and $\mathbf{q}^\top\mathbf{b}<0$. Since Lemma~\ref{le1} holds trivially for $\mathbf{q}=\mathbf{0}$, we restrict attention to nonzero $\mathbf{q}\ge\mathbf{0}$.

For any nonzero $\mathbf{q}\ge\mathbf{0}$, Lemma~\ref{le1} is automatically satisfied when $\mathbf{q}^\top\mathbf{C}\neq\mathbf{0}^\top$. Hence, a violation of Lemma~\ref{le1} can occur only when $\mathbf{q}^\top\mathbf{C}=\mathbf{0}^\top$. As a result, excluding the existence of any nonzero $\mathbf{q}\ge\mathbf{0}$ with $\mathbf{q}^\top\mathbf{C}=\mathbf{0}^\top$ constitutes a sufficient condition to ensure the validity of Lemma~\ref{le1}. This observation naturally leads us to examine the structural properties of $\mathbf{C}$ under which such a vanishing product may arise.

Since $\mathbf{q}$ can be any nonnegative vector, the existence of a nonzero $\mathbf{q}\ge\mathbf{0}$ satisfying $\mathbf{q}^\top\mathbf{C}=\mathbf{0}^\top$ depends critically on how the blocks of $\mathbf{C}$ are structured and combined. To examine this dependence explicitly, we define $\mathbf{C}_i\in\mathbb{R}^{(n-1)\times3}$ as the submatrix composed of the nonzero blocks of $\mathbf{C}$ associated with A$_i$,
\begin{equation}
\begin{split}
\mathbf{C}_i =&
\big[
\mathbf{k}_{i,1} \; \ldots \; \mathbf{k}_{i,i-1} \;
\mathbf{k}_{i,i+1} \; \ldots \; \mathbf{k}_{i,n}
\big]^\top, \\
\mathbf{q}^\top\mathbf{C} =&
\big[
\mathbf{q}_1^\top\mathbf{C}_1 \; \ldots \; \mathbf{q}_n^\top\mathbf{C}_n
\big],
\label{Eq19}
\end{split}
\end{equation}
where $\mathbf{q}_i \in \mathbb{R}^{n-1}$ denotes the subvector of $\mathbf{q}$ associated with $\mathbf{C}_i$.

\begin{theorem}[Sign-consistency condition]
\label{th1}
For all $i \in [n]$, suppose that each column of $\mathbf{C}_i$ has entries that are either all strictly positive or all strictly negative. Under this assumption, Lemma~\ref{le1} holds, and hence the CBF constraints are compatible.

\end{theorem}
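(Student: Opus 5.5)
The plan is to show that, under the sign-consistency assumption, no nonzero $\mathbf{q}\ge\mathbf{0}$ can satisfy $\mathbf{q}^\top\mathbf{C}=\mathbf{0}^\top$. By the discussion following Lemma~\ref{le1}, this is sufficient. For any such $\mathbf{q}$ the first alternative $\mathbf{q}^\top\mathbf{C}\neq\mathbf{0}^\top$ then holds, while for $\mathbf{q}=\mathbf{0}$ the second alternative $\mathbf{q}^\top\mathbf{b}=0\ge 0$ holds trivially. Lemma~\ref{le1}, and hence Lemma~\ref{the2}(a), then gives compatibility of \eqref{Eq15c}.

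\textbf{Step 1 (locate a nonzero block).} Take any nonzero $\mathbf{q}\ge\mathbf{0}$. Each row of $\mathbf{C}$ corresponds to a pair $(i,j)$ with $i<j$, so a positive entry $q_{(i,j)}>0$ exists for some pair. This entry appears in both subvectors $\mathbf{q}_i$ and $\mathbf{q}_j$ via the block decomposition \eqref{Eq19}. In particular, some index $i\in[n]$ has $\mathbf{q}_i\ge\mathbf{0}$ with $\mathbf{q}_i\neq\mathbf{0}$. Here one must check carefully that the ordering of rows in $\mathbf{C}_i$ matches the entries of $\mathbf{q}$ multiplying the $i$-th block column of $\mathbf{C}$. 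This holds because the $i$-th block column of $\mathbf{C}$ has nonzero entries exactly in the $n-1$ rows involving A$_i$, and those entries are $\mathbf{k}_{i,j}^\top$.

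\textbf{Step 2 (sign argument).} Fix any column $c\in\{1,2,3\}$ of $\mathbf{C}_i$. By assumption its entries $(\mathbf{k}_{i,j})_c$, $j\neq i$, are all strictly positive or all strictly negative. In the positive case, $\mathbf{q}_i^\top\mathbf{C}_i e_c=\sum_{j\neq i} q_{(i,j)}(\mathbf{k}_{i,j})_c>0$: every term is nonnegative and at least one is strictly positive because $\mathbf{q}_i\neq\mathbf{0}$. The negative case gives a strictly negative sum in the same way. Hence $\mathbf{q}_i^\top\mathbf{C}_i\neq\mathbf{0}^\top$. By \eqref{Eq19} this yields $\mathbf{q}^\top\mathbf{C}\neq\mathbf{0}^\top$.

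\textbf{Main obstacle.} Step 2 uses only one column and is elementary. The delicate point is the bookkeeping in Step 1: verifying that $\mathbf{q}^\top\mathbf{C}$ decomposes exactly as in \eqref{Eq19}, with each pair variable appearing in two blocks, so that nonzeroness of $\mathbf{q}$ forces some $\mathbf{q}_i\neq\mathbf{0}$. The assumption is also stronger than needed, since one sign-consistent column per $i$ would suffice. I would remark on this but not use it.
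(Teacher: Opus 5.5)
Your proof is correct and follows the same route as the paper. You pick an index $i$ with $\mathbf{q}_i\neq\mathbf{0}$, use a sign-consistent column of $\mathbf{C}_i$ to get a strictly signed inner product, and conclude $\mathbf{q}^\top\mathbf{C}\neq\mathbf{0}^\top$, with $\mathbf{q}=\mathbf{0}$ handled trivially. Your Step 1 observation that each entry of $\mathbf{q}$ appears in two subvectors $\mathbf{q}_i,\mathbf{q}_j$, so the $\mathbf{q}_i$ overlap rather than partition $\mathbf{q}$, is a useful precision that the paper's phrase ``$\mathbf{q}$ is composed of subvectors $\mathbf{q}_i$'' glosses over, though it does not change the argument.
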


\begin{proof}
Consider an arbitrary vector $\mathbf{q} \ge \mathbf{0}$. The case $\mathbf{q}=\mathbf{0}$ trivially satisfies Lemma~\ref{le1}, and we hence restrict attention to $\mathbf{q}\neq\mathbf{0}$.

Since $\mathbf{q}$ is composed of subvectors $\mathbf{q}_i$ and $\mathbf{q}\neq\mathbf{0}$, there exists at least one index $i\in[n]$ such that $\mathbf{q}_i\neq\mathbf{0}$. Moreover, because $\mathbf{q}\ge\mathbf{0}$, this subvector $\mathbf{q}_i$ contains at least one strictly positive entry.

By construction, $\mathbf{q}_i^\top \mathbf{C}_i$ is a row vector whose entries are the inner products between $\mathbf{q}_i$ and the columns of $\mathbf{C}_i$. Under the assumption of sign-consistency, each column of $\mathbf{C}_i$ has entries that are strictly positive or strictly negative. Therefore, the inner product between $\mathbf{q}_i$ and any column of $\mathbf{C}_i$ is nonzero and has a definite sign whenever $\mathbf{q}_i\ge\mathbf{0}$ and $\mathbf{q}_i\neq\mathbf{0}$. This implies the block row vector $\mathbf{q}_i^\top \mathbf{C}_i \neq \mathbf{0}^\top$.

Since $\mathbf{q}^\top \mathbf{C}$ is composed of $\mathbf{q}_i^\top \mathbf{C}_i$, the existence of at least one  block $\mathbf{q}_i^\top \mathbf{C}_i \neq \mathbf{0}^\top$ implies $\mathbf{q}^\top \mathbf{C} \neq \mathbf{0}^\top$. Therefore, for any $\mathbf{q}\ge\mathbf{0}$ with $\mathbf{q}\neq\mathbf{0}$, the condition in Lemma~\ref{le1} is satisfied.
\end{proof}

To enhance the internal compatibility of the CBF constraints, the sign-consistency condition should be satisfied as much as possible. To this end, we examine the column entries of $\mathbf{C}_i$, namely the row vectors $\mathbf{k}_{i,j}^{\top}$.

According to \eqref{Eq12}, the matrix $\mathbf{W}_i = [\,\mathbf{w}_{i,1},\, \mathbf{w}_{i,2},\, \mathbf{w}_{i,3}\,]$ acts as a right multiplier of $(\mathbf{s}_i - \mathbf{s}_j)^\top$ in the expression of $\mathbf{k}_{i,j}^{\top}$. Consequently, each entry of $\mathbf{k}_{i,j}^{\top}$ is given by the inner product between $(\mathbf{s}_i - \mathbf{s}_j)$ and the corresponding column vector $\mathbf{w}_{i,c}$, where $c \in [3]$. As a result, the sign of each element of $\mathbf{k}_{i,j}^{\top}$ is completely determined by the inner products $(\mathbf{s}_i - \mathbf{s}_j)^\top \mathbf{w}_{i,c}$.
\begin{figure}[b] 
  \centering
  
    \includegraphics[width=0.37\textwidth]{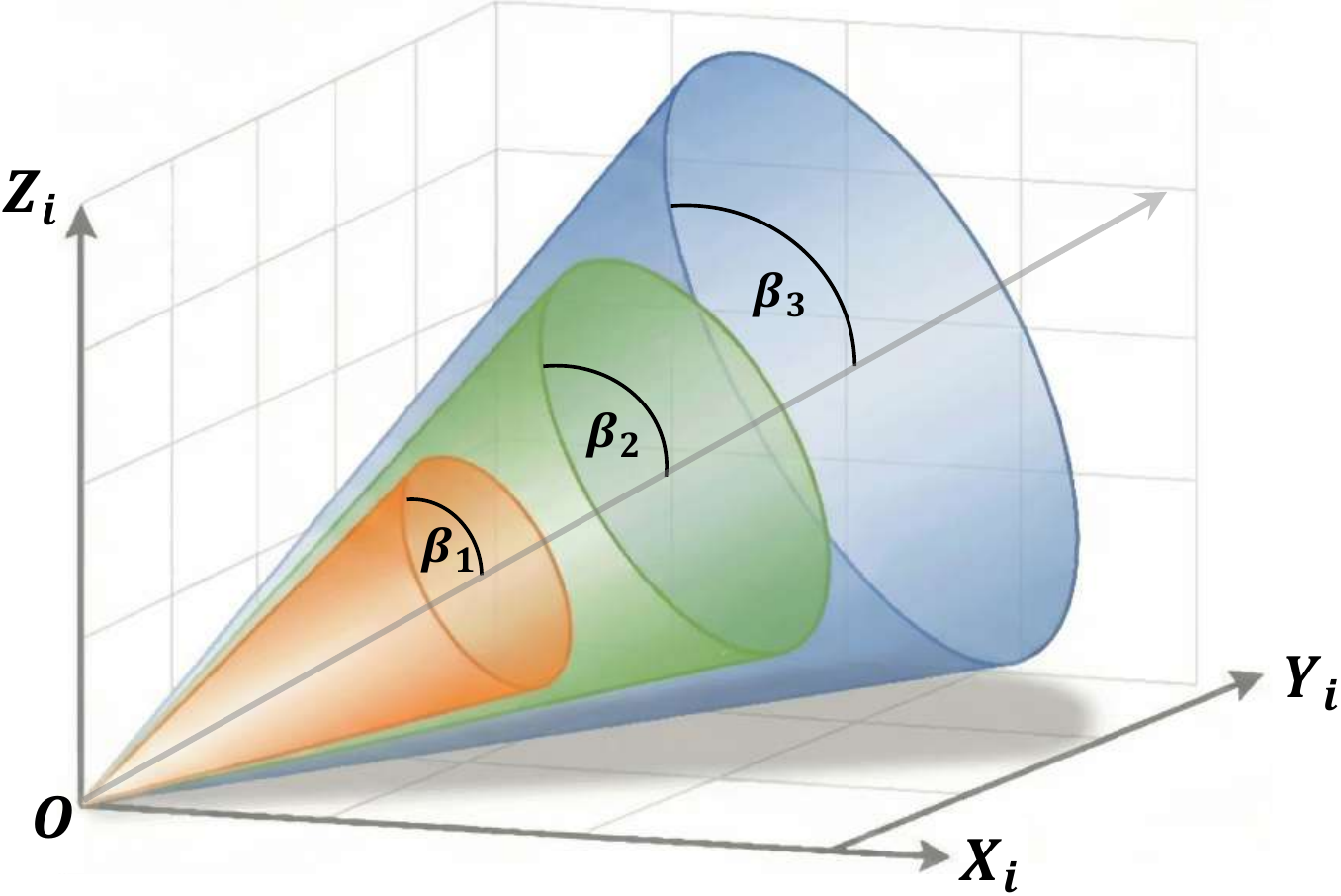}
  
  \caption{The cones of the sign-consistency constraint~\eqref{Eq52} . $\beta_1<\beta_2<\beta_3$ are the half-apex angles of these cones, respectively.}
  \label{fig:conestraint}
\end{figure}
From \eqref{Eq9}, since $\mathbf{W}_i = \mathbf{R}_i \operatorname{diag}(1,\, v_i,\, v_i)$, the column vectors $\mathbf{w}_{i,c}$ are mutually orthogonal and hence form an orthogonal basis of $\mathbb{R}^3$. Accordingly, a local coordinate frame $\mathrm{O}\!-\!\mathrm{X}_i\mathrm{Y}_i\mathrm{Z}_i$ can be defined with $\mathbf{s}_i$ as the origin, whose $\mathrm{X}$-, $\mathrm{Y}$-, and $\mathrm{Z}$-axes are aligned with $\mathbf{w}_{i,1}$, $\mathbf{w}_{i,2}$, and $\mathbf{w}_{i,3}$, respectively. Therefore, the signs of the elements of $\mathbf{k}_{i,j}^{\top}$ are uniquely determined by the octant of the local frame $\mathrm{O}\!-\!\mathrm{X}_i\mathrm{Y}_i\mathrm{Z}_i$ in which the relative vector $(\mathbf{s}_i - \mathbf{s}_j)$ lies.

Since the sign-consistency condition requires each column of $\mathbf{C}_i$ to have identical signs, it follows that, for all $j \in [n] \backslash \{i\}$, the relative vectors $(\mathbf{s}_i - \mathbf{s}_j)$ must lie within the same octant of $\mathrm{O}\!-\!\mathrm{X}_i\mathrm{Y}_i\mathrm{Z}_i$. When this condition is satisfied, Lemma~\ref{le1} holds, and hence the CBF constraint $\mathbf{C}\mathbf{u} \leq \mathbf{b}$ is internally compatible.

To enforce this structural requirement, we construct a sign-consistency constraint for A$_i$ by approximating the target octant using a conic region. This constraint confines all relevant vectors to a common cone aligned with the desired axis direction, which is formulated as
\begin{align}
\label{Eq20}
\mathbf{d}_i^{\top}\mathbf{\widetilde{W}}_i^{\top}
\frac{
\mathbf{s}_i + \dot{\mathbf{s}}_i - \mathbf{s}_j - \dot{\mathbf{s}}_j
}{
\left\|\mathbf{s}_i + \dot{\mathbf{s}}_i - \mathbf{s}_j - \dot{\mathbf{s}}_j\right\|
}
\geq \cos\beta, \quad j \in [n] \backslash \{i\},
\end{align}
where $\beta \in (0, \pi/2)$ denotes the half-apex angle of the cone. The matrix $\mathbf{\widetilde{W}}_i = [\mathbf{\widetilde{w}}_{i,1}, \mathbf{\widetilde{w}}_{i,2}, \mathbf{\widetilde{w}}_{i,3}]$ is composed of normalized orthogonal vectors defining the local frame, and $\mathbf{d}_i$ denotes the axis direction of the cone, given by
\begin{align}
\label{Eq51} 
\mathbf{d}_i=\frac{\sqrt{3}}{3}\,\boldsymbol{\rm sign}\left(\mathbf{\widetilde{W}}_i^{\top}\left(\mathbf{p}^g_i-\mathbf{p}_i\right)\right),
\end{align}
where $\boldsymbol{\rm sign}(\cdot)$ denotes the element-wise sign function, and $\mathbf{p}^g_i$ represents the nominal goal position of A$_i$.

However, this constraint cannot be directly incorporated into the QP.
According to~\eqref{Eq8}, the state derivative $\dot{\mathbf{s}}$ explicitly
depends on the control input, which causes the normalization term in the
denominator of \eqref{Eq20} to become control-dependent.
As a result, the constraint becomes nonlinear in the control input and
thus incompatible with the standard QP formulation.

To derive a tractable constraint, the denominator is approximated
using the velocity $\mathbf{v}$ instead of $\dot{\mathbf{s}}$, thereby eliminating its dependence on the control input. Additionally, a nonnegative slack variable $\epsilon_{i,j}$ is introduced to allow for limited violations of the constraint.
The resulting sign-consistency constraint implemented in the QP
is formulated as
\begin{align}
\label{Eq52}
\mathbf{d}_i^{\top}\mathbf{\widetilde{W}}_i^{\top}
\frac{
\mathbf{s}_i + \dot{\mathbf{s}}_i - \mathbf{s}_j - \dot{\mathbf{s}}_j
}{
\left\|\mathbf{s}_i + \mathbf{v}_i - \mathbf{s}_j - \mathbf{v}_j\right\|
}
+ \epsilon_{i,j}
\geq \cos\beta, \quad j \in [n]\backslash\{i\},
\end{align}

\subsection{Decentralized CBF-QP with Sign-Consistency Constraint}

In this subsection, the sign-consistency constraint is incorporated into the decentralized CBF-QP framework. In the decentralized collision avoidance formulation, A$_i$ has access only to the states of its neighboring UAVs, indexed by the set $\mathcal{A}_i \subseteq [n]\backslash\{i\}$, but not to their control inputs. As a result, the original CBF and sign-consistency constraints, which explicitly depend on $\mathbf{u}_j$ and $\dot{\mathbf{s}}_j$, are no longer directly applicable and must be reformulated.
\begin{figure}[!tb]
  \centering
    \includegraphics[width=0.47\textwidth]{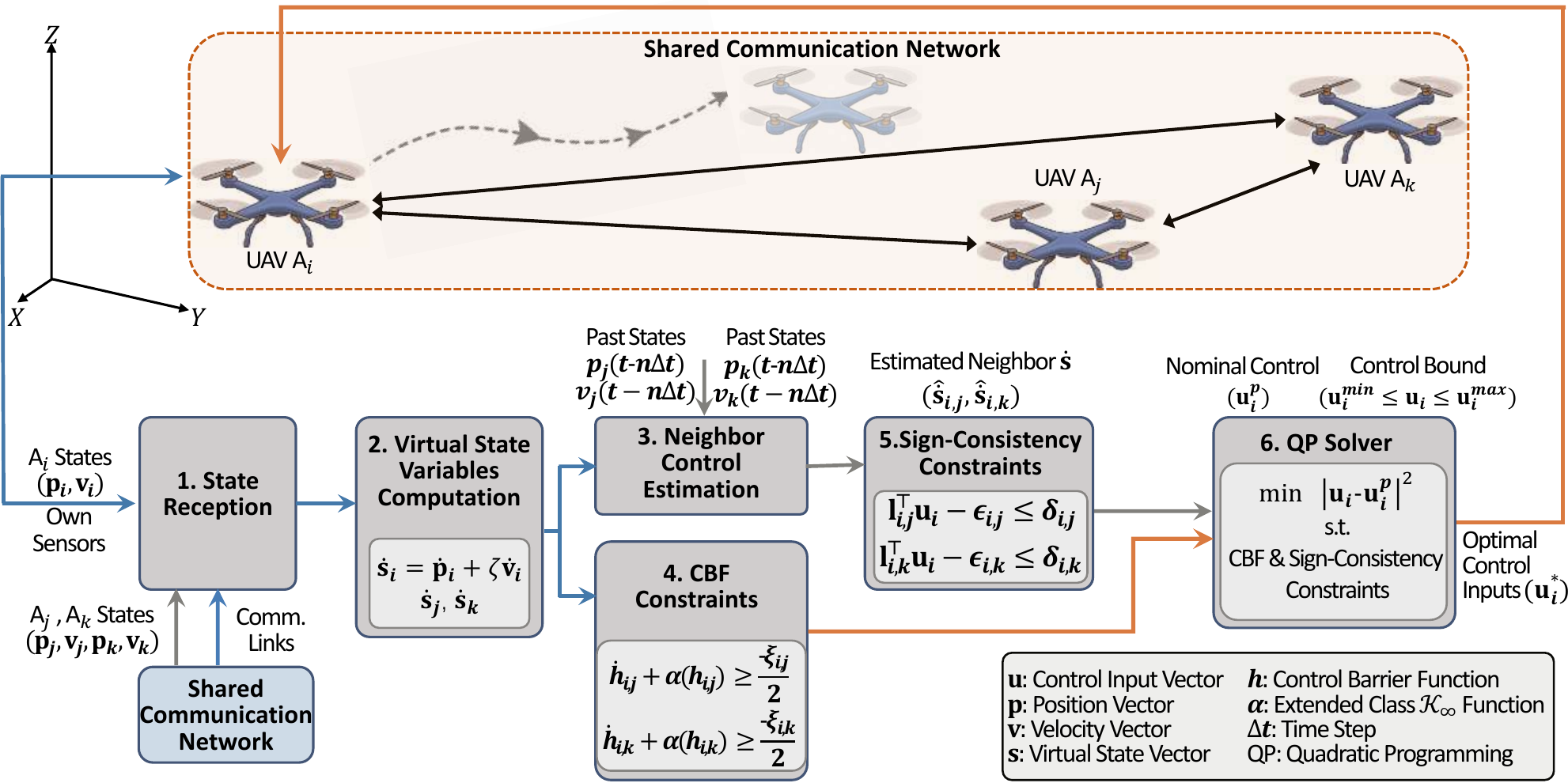}
  
  \caption{Overview of the proposed FECBF method.}
  \label{fig:overview}
\end{figure}
Following the commonly adopted assumption of equal collision avoidance responsibility between UAVs, the decentralized CBF constraint is reformulated from~\eqref{Eq11} as

\begin{align}
\begin{split}
\label{Eq23} 
-\mathbf{k}^{\top}_{i,j}\mathbf{u}_i\leq&\frac{\xi_{i,j}}{2}, \quad  j\in \mathcal{A}_i.
\end{split}
\end{align}

To reformulate the sign-consistency constraint, $\dot{\mathbf{s}}_j$ is replaced with a
worst-case estimate inferred from historical data.
Let $\mathcal{S}_{i,j}$ denote the set of historical states of A$_j$ known to
A$_i$.
Using finite-difference approximations, A$_i$ reconstructs a set of admissible control input sequences consistent with the observed state evolution. The estimate $\widehat{\dot{\mathbf{s}}}_{i,j}$ is obtained by selecting the control input sequence that maximizes $\mathbf{d}_i^\top \widetilde{\mathbf{W}}_i^\top \dot{\mathbf{s}}_j$, corresponding to the worst-case variation of the sign-consistency constraint \eqref{Eq52}. Therefore, the decentralized sign-consistency constraint is reformulated as
\begin{align}
\begin{split}
\label{Eq24} 
\mathbf{l}^{\top}_{i,j}\mathbf{u}_i - \epsilon_{i,j} \leq &\ \delta_{i,j}, \quad j \in \mathcal{A}_i,
\end{split}
\end{align}
where
\begin{align}
\begin{split}
\label{Eq25} 
\mathbf{l}_{i,j} = &\ \frac{-\zeta \mathbf{d}_i^\top \widetilde{\mathbf{W}}_i^\top \mathbf{W}_i}{\left\|\mathbf{s}_i + \mathbf{v}_i - \mathbf{s}_j - \mathbf{v}_j\right\|}, \\
\delta_{i,j} = &\ \frac{\mathbf{d}_i^\top \widetilde{\mathbf{W}}_i^\top\left(\mathbf{s}_i + \dot{\mathbf{s}}_i - \mathbf{s}_j - \widehat{\dot{\mathbf{s}}}_{i,j}\right)}{\left\|\mathbf{s}_i + \mathbf{v}_i - \mathbf{s}_j - \mathbf{v}_j\right\|} - \cos\beta,
\end{split}
\end{align}

Consequently, the decentralized feasibility-enhanced control barrier function quadratic program (FECBF-QP) is formulated as
\begin{subequations}
\label{Eq26}
\begin{align}
\mathop{\arg\min}_{\mathbf{u}_i,\, \boldsymbol{\upepsilon}_i} \quad 
    & \lVert \mathbf{u}_i - \mathbf{u}_i^{p} \rVert^2 + \lambda \lVert \boldsymbol{\upepsilon}_i \rVert^2, \label{Eq3a}\\
\text{s.t.} \quad 
    & \mathbf{u}_i^{\min} \leq \mathbf{u}_i \leq \mathbf{u}_i^{\max}, 
      \qquad \boldsymbol{\upepsilon}_i \geq \boldsymbol{0}, \label{Eq3b}\\
    & -\mathbf{k}^{\top}_{i,j}\mathbf{u}_i \leq \tfrac{\xi_{i,j}}{2}, 
      \quad \forall j \in \mathcal{A}_i, \label{Eq3c}\\
    & \mathbf{l}_{i,j}^{\top} \mathbf{u}_i - \epsilon_{i,j} \leq \delta_{i,j}, 
      \quad \forall j \in \mathcal{A}_i, \label{Eq3e}
\end{align}
\end{subequations}
where $\boldsymbol{\upepsilon}_i =  \left( \epsilon_{i,j} \right)_{j \in \mathcal{A}_i}$ denotes the vector of slack variables, and $\lambda > 0$ denotes the weight of the slack term. Based on the formulated control barrier functions and the sign-consistency constraints, each UAV executes  the optimal avoidance maneuver derived from this model. The overall framework of the proposed method is illustrated in Fig.~\ref{fig:overview}.

\section{EXPERIMENTS}
\label{se3}
This section presents the experimental validation of the proposed method. We begin by describing the simulation setup and configurations, followed by a comparison with CBF-based baseline methods. The robustness of the method to time delays is then evaluated through dedicated simulations. Finally, the section concludes with real-world experiments.

\subsection{Simulation Configurations and Parameters}
\label{seSC}
All simulations are conducted in MATLAB R2019a on a workstation equipped with an Intel Core i7-12700KF processor (12 cores) and 32~GB of RAM. The key experimental parameters are summarized in Table~\ref{tab:exp_params}. For each UAV, the maximum speed $v_i^{\max}$ is randomly sampled from the range $[2, 3]$ $\mathrm{m/s}$ at the beginning of each trial and remains fixed throughout the simulation.

\begin{table}[h]
    \centering
    \caption{Key experimental parameters used in simulations. Angular quantities are given in radians.}
    \label{tab:exp_params}
    \begin{tabular}{llllll}
        \toprule
        Para.            & Val.          & Para.            & Val.   & Para.            & Val.       \\
        \midrule
        $\theta^{\max}_i$         & ${\pi}/{2}$   & $v^{\max}_i$      & $[2, 3]\,\mathrm{m/s}$   & $\gamma^{\max}_i$      & $\pi/36$  \\
        $\theta^{\min}_i$         & $-{\pi}/{2}$  & $v^{\min}_i$     & ${v^{\max}_i}/{4}$   & $\gamma^{\min}_i$     & $-\pi/36$                \\
        $\psi^{\max}_i$  & $2{\pi}$& $a^{\max}_i$     & $1\,\mathrm{m/s}$  & $\omega^{\max}_i$     & $\pi/18$             \\
        $\psi^{\min}_i$ & $0$& $a^{\min}_i$     & $-1\,\mathrm{m/s}$ & $\omega^{\min}_i$     & $-\pi/18$\\    
        $\zeta$ & $0.5$& $\kappa$     & $0.08$ & $\beta$     & $7\pi/24$\\    
        $\lambda$ & $3$\\  
        \bottomrule
    \end{tabular}
    \vspace{-1mm}
\end{table}

\subsection{Simulation Setup and Evaluation Metrics}

The performance of the proposed method is compared against the following established CBF-based approaches in simulation:
\begin{itemize}
    \item \textit{DRCBF}~\cite{jankovic2023multiagent}: A decentralized CBF-based method where each UAV assumes half of the responsibility for collision avoidance by symmetrically splitting the safety constraints.
    \item \textit{Velocity Obstacle CBF (VOCBF)}~\cite{roncero2025multi}: A CBF formulation augmented with velocity obstacle (VO) constraints that restrict the UAV's velocity to remain outside the VO, thereby preventing collisions.
\end{itemize}

Four evaluation metrics are used to assess the performance of all methods:
\begin{itemize}
    \item \textit{Success Rate (SR)}: For each trial, SR is defined as the ratio of UAVs that successfully reach their destinations without collision to the total number of UAVs in that trial.
    \item \textit{Infeasibility Count (IC)}: The total number of control steps during which the CBF optimization problem becomes infeasible for a UAV throughout its entire flight.
    \item \textit{Arrival Time (AT)}: The time required for a UAV to reach its destination.
    \item \textit{Computational Time (CT)}: The time required to compute the control input at each time step.
\end{itemize}

\begin{figure}[!b] 
  \centering
    \subfloat[]{\includegraphics[width=0.15\textwidth]{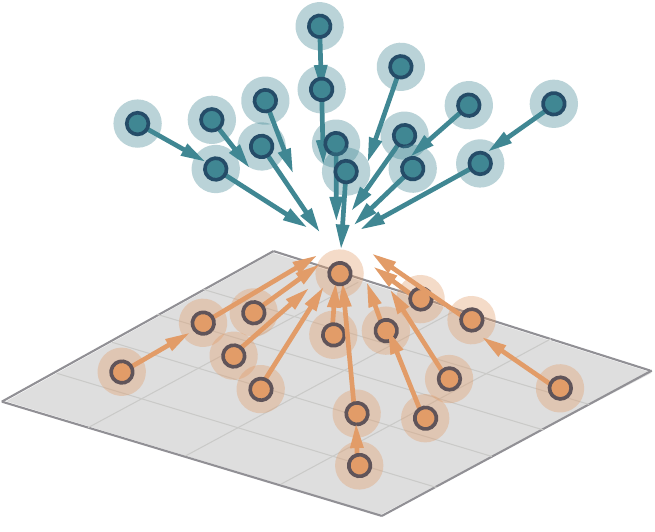}}
    \subfloat[]{\includegraphics[width=0.13\textwidth]{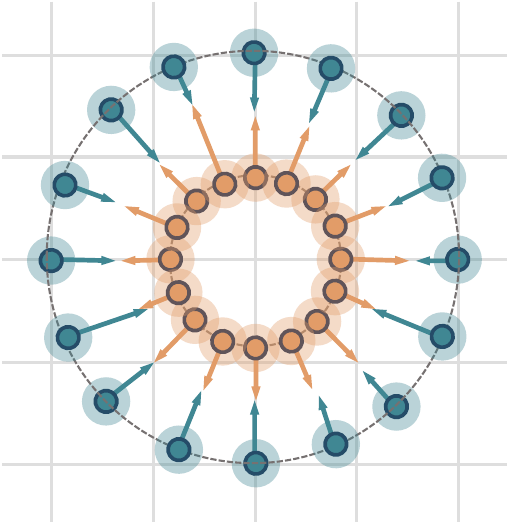}}
    \subfloat[]{\includegraphics[width=0.15\textwidth]{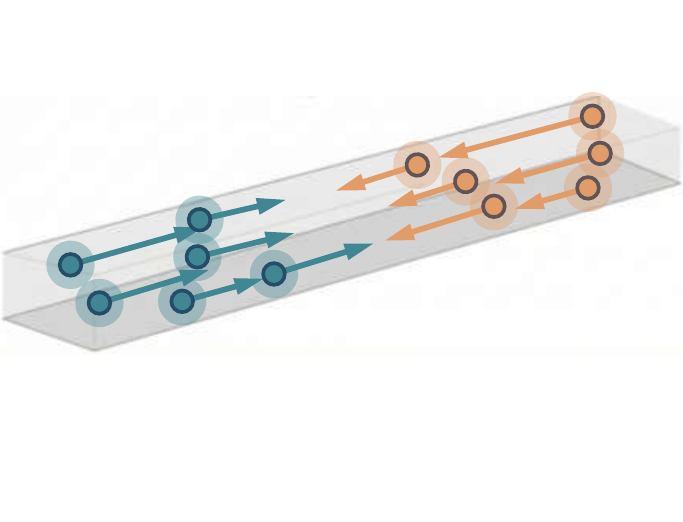}}
  \caption{Illustration of three representative and challenging simulation scenarios used for performance evaluation. The gray shaded region denotes the ground plane, while the points with halo and arrows represent the positions and velocity vectors of the UAVs, respectively. (a) Convergence scenario; (b) Dual-circle scenario, depicted from a top view; (c) Head-on scenario.}
  \label{fig:trajectories}
\end{figure}

\begin{table*}[!]
\renewcommand{\arraystretch}{1.3}
\caption{Performance Comparison of Different Methods Across Scenarios}
\label{table:comparison_results}
\centering
\scriptsize 
\setlength{\tabcolsep}{1.5pt} 

\begin{tabular}{
    >{\raggedright\arraybackslash}p{40pt} 
    >{\centering\arraybackslash}p{13pt}   
    *{4}{ >{\centering\arraybackslash}p{26pt} 
          >{\centering\arraybackslash}p{26pt} 
          >{\centering\arraybackslash}p{26pt} } 
}
\toprule
\multirow{2}{*}{Scenario} & \multirow{2}{*}{$n$} & \multicolumn{3}{c}{SR [\%] $\uparrow$} & \multicolumn{3}{c}{IC  $\downarrow$} & \multicolumn{3}{c}{AT [s] $\downarrow$} & \multicolumn{3}{c}{CT [ms] $\downarrow$} \\
\cmidrule(lr){3-5} \cmidrule(lr){6-8} \cmidrule(lr){9-11} \cmidrule(lr){12-14}
& & Ours & VOCBF & DRCBF & Ours & VOCBF & DRCBF & Ours & VOCBF & DRCBF & Ours & VOCBF & DRCBF \\
\midrule
& 50 & \textbf{97.98} & 96.72 & 95.92 & \textbf{96.58} & 114.50 & 121.11 & \textbf{377.28} & 382.14 & 384.70 & 1.10 & 1.14 & \textbf{0.96} \\
Convengence & 100 & \textbf{96.72} & 95.56 & 93.66 & \textbf{147.73} & 182.82 & 184.41 & \textbf{408.32} & 419.69 & 418.65 & 1.32 & 1.61 & \textbf{1.08} \\
& 150 & \textbf{95.89} & 94.01 & 91.59 & \textbf{193.51} & 240.49 & 242.01 & \textbf{435.45} & 448.91 & 447.32 & 1.50 & 1.89 & \textbf{1.21} \\
\hline
& 50 & \textbf{100.00} & 93.02 & 93.16 & \textbf{75.31} & 141.44 & 139.78 & \textbf{332.99} & 337.39 & 336.86 & 0.86 & 0.80 & \textbf{0.62} \\
Dual-circle & 100 & \textbf{100.00} & 93.56 & 93.74 & \textbf{74.59} & 141.45 & 139.79 & \textbf{332.97} & 337.38
 & 336.85& 1.10 & 1.05 & \textbf{0.96} \\
& 150 & \textbf{100.00} & 95.72 & 94.09 & \textbf{68.73} & 138.05 & 139.82 & \textbf{332.52} & 336.86 & 336.86 & 1.33 & 1.12 & \textbf{0.94} \\
\hline
& 50 & \textbf{98.00} & 97.64 & 97.90 & \textbf{58.89} & 66.84 & 64.93 & 349.04 & 348.87 & \textbf{347.65} & 1.20 & 1.49 & \textbf{1.07}\\
Head-on & 100 & \textbf{96.28} & 94.11 & 93.71 & \textbf{111.50} & 145.40 & 141.21 & 382.06 & 382.75 & \textbf{381.80} & 1.39 & 1.82 & \textbf{1.20} \\
& 150 & \textbf{95.57} & 89.45 & 89.14 & \textbf{163.82} & 218.59 & 215.35 & 414.49 & 412.53 & \textbf{411.58} & 1.57 & 2.17 & \textbf{1.39} \\
\bottomrule
\addlinespace[0.5ex]
\multicolumn{14}{l}{\scriptsize Ours: FECBF; SR: Success Rate; IC: Infeasibility Count; AT: Arrival Time; CT: Computational Time; \textbf{Bold} indicates best performance.}
\end{tabular}
\vspace{-2mm}
\end{table*}

Each method is evaluated under the following three scenarios:
\begin{itemize}
    \item \textit{Convergence Scenario}: $n$ UAVs are initialized on collision courses toward a common waypoint $\mathbf{p}_{c} = [1000, 1000, 250]^\top\,\mathrm{m}$. The initial distance of each UAV from the waypoint is determined according to its $v_i^{\max}$ such that, in the absence of collision avoidance, all UAVs would arrive at $\mathbf{p}_{c}$ simultaneously at $t = 150\,\mathrm{s}$.
    
    \item \textit{Dual-circle Scenario}: $n$ UAVs are distributed on two concentric circles with radii of $400\,\mathrm{m}$ and $600\,\mathrm{m}$ at an altitude of $200\,\mathrm{m}$, with $n/2$ UAVs assigned to each circle. UAVs on the outer circle fly toward the center, while those on the inner circle fly radially outward.
    
    \item \textit{Head-on Scenario}: Two groups of $n/2$ UAVs are initialized on opposite sides of a region, with their lateral positions confined within a width of $200\,\mathrm{m}$. The two groups fly toward each other to induce dense head-on encounters.
\end{itemize}

For all scenarios, each UAV is assigned a destination along its initial flight direction. The travel distance is determined by its maximum speed $v_i^{\max}$ and a fixed mission duration of $300\,\mathrm{s}$. The illustration of these scenarios is shown in Fig.~\ref{fig:trajectories}.

\subsection{Comparison with Baselines}
\label{seRA}
We evaluate the performance of our method in three scenarios involving 50, 100, and 150 UAVs. To ensure statistical significance, 100 Monte Carlo simulations are conducted for each case. The quantitative results are summarized in Table~\ref{table:comparison_results}.

As shown in Table~\ref{table:comparison_results}, our method consistently achieves the highest SR in all scenarios and UAV counts. In particular, in the dual-circle scenario, our approach maintains a perfect SR of 100$\%$ for all UAV counts, while both VOCBF and DRCBF exhibit noticeable performance degradation. In the convergence and head-on scenarios, although the SR of all methods decreases as the number of UAVs increases, our method demonstrates superior scalability by consistently outperforming the baselines, with the performance gap becoming more pronounced in denser settings.
\begin{figure}[!t] 
  \centering
    \includegraphics[width=0.46\textwidth]{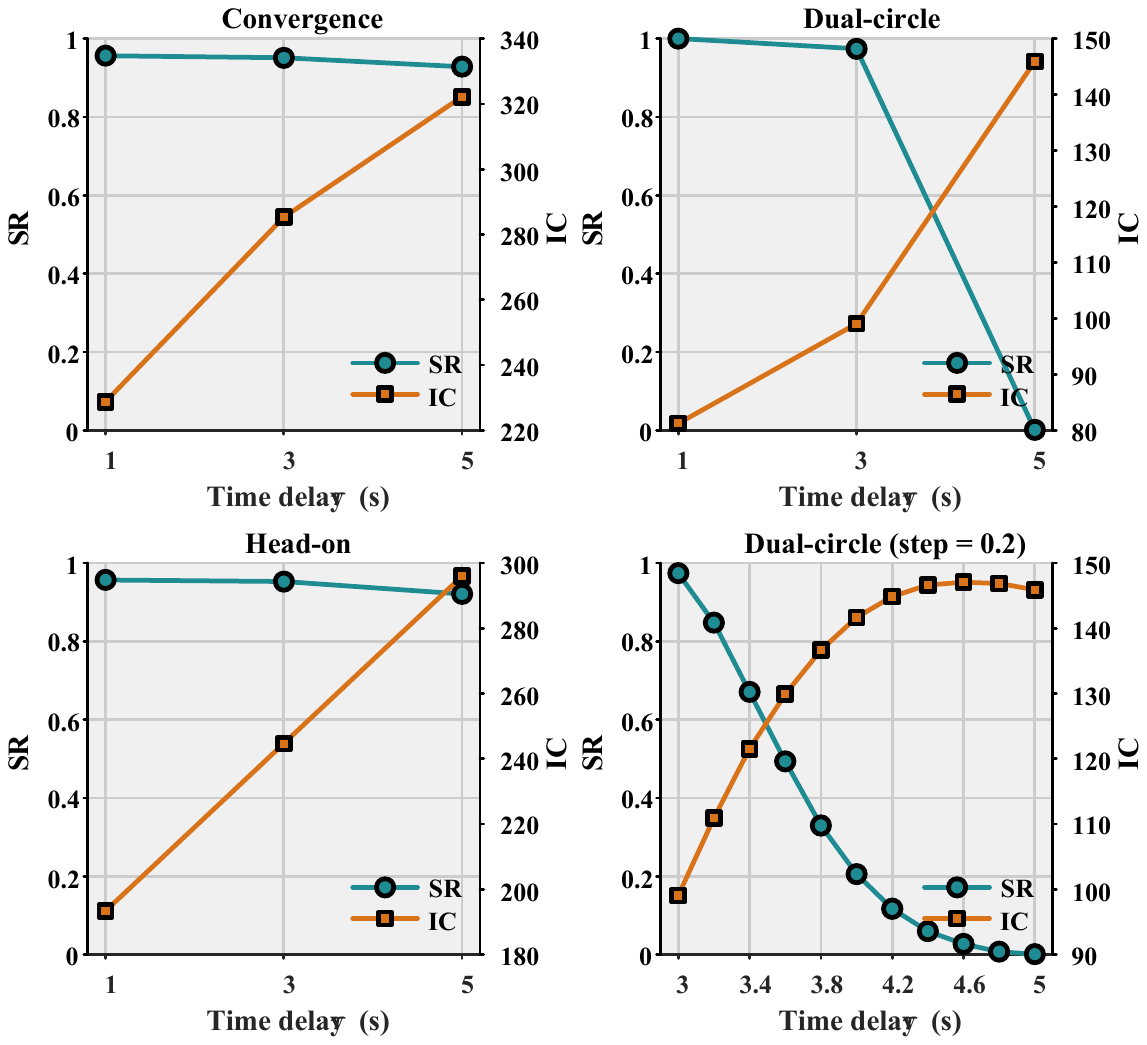}
  \caption{Time delay simulation results for all three scenarios with $n = 150$. The bottom-right subfigure shows the results for the dual-circle scenario with $\tau$ ranging from $3$ s to $5$ s at intervals of $0.2$ s, while the remaining subfigures show results for $\tau \in \{1, 3, 5\}$ s.}
  \label{fig:timedelay}
\end{figure}
The improved SR is closely associated with the reduced IC. Our method maintains the lowest IC across all scenarios and UAV numbers, indicating a significantly enhanced feasibility of the underlying CBF-QP. This advantage becomes increasingly evident in the convergence and head-on scenarios as the UAV density grows, highlighting the effectiveness of the proposed sign-consistency constraint in mitigating constraint conflicts. The strong correlation between the trends of IC and SR indicates that improving feasibility is crucial for  improving the overall reliability of CBF-based collision avoidance. 

In addition to safety and feasibility, our method exhibits favorable performance in terms of AT. Specifically, it achieves the shortest AT in nearly all cases in the convergence and dual-circle scenarios. In particular, in the convergence scenario, the AT is reduced by more than 10 seconds compared to the baselines as the number of UAVs increases, indicating that improved collision avoidance performance is achieved without sacrificing operational efficiency.

Regarding CT, our method and VOCBF incur slightly higher per-step computation costs than DRCBF due to the inclusion of sign-consistency constraints and velocity obstacle constraints, respectively. Nevertheless, the CT of all evaluated methods remains within the same order of magnitude across all UAV counts. Importantly, the computational overhead of our method consistently satisfies real-time requirements, demonstrating its practical applicability in dense multi-UAV environments.

\subsection{Evaluation of Time Delay Robustness}
\label{seRB}
Time delay is a key factor affecting the effectiveness of collision avoidance algorithms in practical applications. To assess the robustness of our method under latency, simulations are conducted in all three scenarios with time delays $\tau \in \{1,3,5\}$~s, focusing on the most challenging setting with UAV count $n=150$.

As shown in Fig.~\ref{fig:timedelay}, the results demonstrate a graceful performance degradation of our method as the delay increases in the convergence and head-on scenarios. Specifically, SR decreases moderately from approximately $0.96$ to above $0.92$ as the delay $\tau$ increases from $1~\mathrm{s}$ to $5~\mathrm{s}$, accompanied by a monotonic but controlled increase in IC.
In these scenarios, potential conflicts do not evolve in a highly time-critical manner, leaving sufficient collision avoidance margin even under delayed information. Consequently, both SR and IC remain at favorable levels under moderate communication delays. 

In contrast, the dual-circle scenario exhibits higher sensitivity to delay, as UAVs move directly toward each other and the available reaction time is reduced. Although SR drops noticeably under large delays, the corresponding increase in IC remains moderate because infeasible CBF-QP instances mainly occur during a short time-critical interaction phase. Once the opposing UAVs pass through this region, further conflicts rarely arise and the CBF-QP quickly returns to a feasible region.

\begin{figure}[!tb] 
  \centering
    \includegraphics[width=0.44\textwidth]{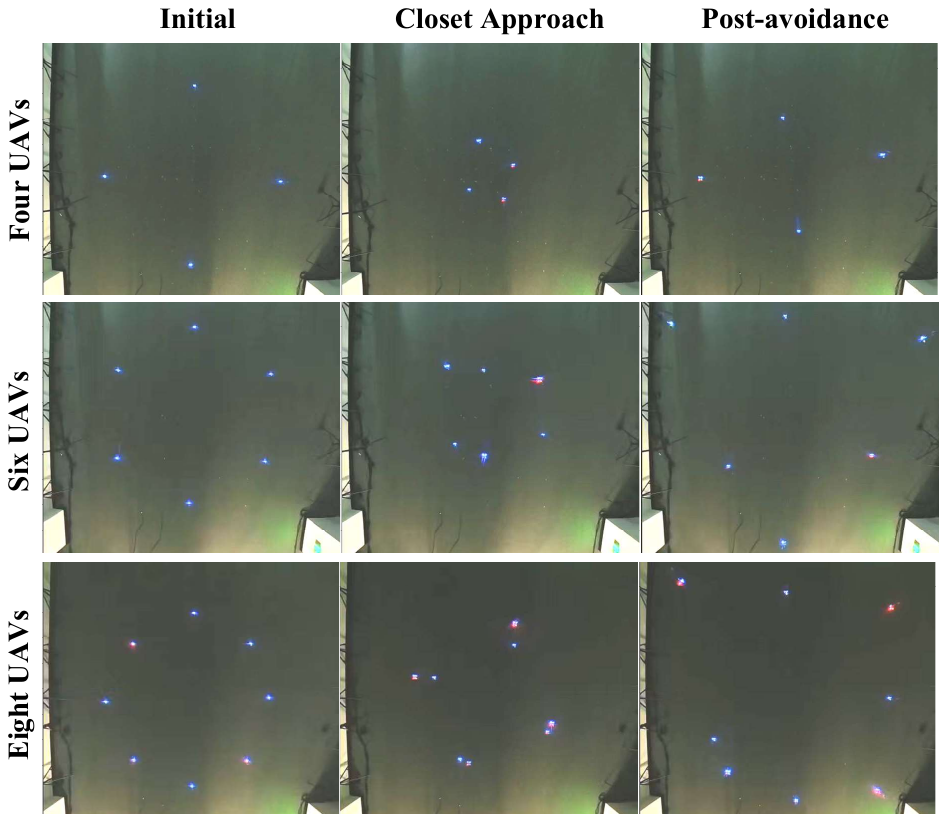}
  \caption{Snapshots of real-world experiments with four, six, and eight UAVs.
Each row corresponds to a different UAV counts, while columns represent the initial configuration, the moment of closest approach during the collision avoidance, and the post-avoidance configuration.}
  \label{fig:real-world}
\end{figure}

\subsection{Real-world Experiments}
To validate the effectiveness of the proposed method in practice, we conduct real-world experiments with Crazyswarm platforms and NOKOV motion capture systems. In the initial configuration, the UAVs are uniformly distributed in the horizontal plane, with half positioned at an altitude of 1.5 m and the other half at 0.9 m. They travel toward their opposite positions, resulting in a convergence scenario. The UAVs successfully avoid collisions using the proposed method. Representative snapshots are shown in Fig.~\ref{fig:real-world}.

\section{CONCLUSION}
\label{se5}
In this letter, we improve the feasibility of CBF-based multi-UAV collision avoidance in dense scenarios. We analyze the internal compatibility among multiple CBF constraints to reveal a fundamental bottleneck limiting the feasibility of the CBF-QP. Based on this analysis, a sign-consistency constraint is proposed and incorporated into a decentralized CBF-QP framework to enhance feasibility. Simulation results demonstrate reduced infeasibility and improved collision avoidance performance, while real-world experiments validate the practical applicability of the proposed method. Future work will investigate compatibility analysis under uncertainty.



\bibliography{IEEEabrv,TRO3}

@article{ge2025multi,
  title={Multi-uav search and rescue in wilderness using smart agent-based probability models},
  author={Ge, Zijian and Jiang, Jingjing and Coombes, Matthew},
  journal={IEEE Transactions on Aerospace and Electronic Systems},
  volume={62},
  pages={1649--1662},
  year={2026},
  publisher={IEEE}
}

@article{cao2025cooperative,
  title={Cooperative Aerial Robot Inspection Challenge: A Benchmark for Heterogeneous Multi-Uncrewed-Aerial-Vehicle Planning and Lessons Learned},
  author={Cao, Muqing and Nguyen, Thien-Minh and Yuan, Shenghai and Anastasiou, Andreas and Zacharia, Angelos and Papaioannou, Savvas and Kolios, Panayiotis and Panayiotou, Christos G and Polycarpou, Marios M and Xu, Xinhang and others},
  journal={IEEE Robotics \& Automation Magazine},
  year={2025},
  publisher={IEEE}
}

@article{zhang2025fast,
  title={A Fast Multi-UAV Assistance Positioning Architecture for Large-Scale Farming Operations},
  author={Zhang, Jingyao and Chen, Haihua and Dai, Feng and Wang, Yancong and Li, Heyang and Zhang, Yucheng},
  journal={IEEE Internet of Things Journal},
  year={2025},
  volume={12},
  number={18},
  pages={37645--37658},
  publisher={IEEE}
}

@article{zhong20253d,
  title={3D RVO-Enhanced Multi-Agent Deep Reinforcement Learning for Collision Avoidance in Urban Structured Airspace},
  author={Zhong, Gang and Yupu, Liu and Sen, Du and Fei, Wang and Jinlun, Zhou and Zhang, Honghai},
  journal={Aerospace Science and Technology},
  pages={110378},
  year={2025},
  publisher={Elsevier}
}

@article{cattai2025multi,
  title={Multi-uav reinforcement learning with realistic communication models: Recent advances and challenges},
  author={Cattai, Tiziana and Frattolillo, Francesco and Lacava, Andrea and Raut, Prasanna and Simonjan, Jennifer and D'Oro, Salvatore and Melodia, Tommaso and Vinogradov, Evgenii and Natalizio, Enrico and Colonnese, Stefania and others},
  journal={IEEE Open Journal of Vehicular Technology},
  year={2025},
  pages={2067--2081},
  volume={6},
  publisher={IEEE}
}

@inproceedings{roncero2025multi,
  title={Multi-agent obstacle avoidance using velocity obstacles and control barrier functions},
  author={Roncero, Alejandro S{\'a}nchez and Muchacho, Rafael I Cabral and {\"O}gren, Petter},
  booktitle={2025 IEEE International Conference on Robotics and Automation (ICRA)},
  pages={6638--6644},
  year={2025},
  organization={IEEE}
}

@article{jankovic2023multiagent,
  title={Multiagent systems with CBF-based controllers: Collision avoidance and liveness from instability},
  author={Jankovic, Mrdjan and Santillo, Mario and Wang, Yan},
  journal={IEEE Transactions on Control Systems Technology},
  volume={32},
  number={2},
  pages={705--712},
  year={2023},
  publisher={IEEE}
}

@article{huang2025dynamic,
  title={Dynamic collision avoidance using velocity obstacle-based control barrier functions},
  author={Huang, Jihao and Zeng, Jun and Chi, Xuemin and Sreenath, Koushil and Liu, Zhitao and Su, Hongye},
  journal={IEEE Transactions on Control Systems Technology},
  year={2025},
  volume={33},
  number={5},
  pages={1601--1615},
  publisher={IEEE}
}

@inproceedings{huriot2025safe,
  title={Safe decentralized multi-agent control using black-box predictors, conformal decision policies, and control barrier functions},
  author={Huriot, Sacha and Sibai, Hussein},
  booktitle={2025 IEEE International Conference on Robotics and Automation (ICRA)},
  pages={7445--7451},
  year={2025},
  organization={IEEE}
}

@inproceedings{liu2025safety,
  title={Safety-critical planning and control for dynamic obstacle avoidance using control barrier functions},
  author={Liu, Shuo and Mao, Yihui and Belta, Calin A},
  booktitle={2025 American Control Conference (ACC)},
  pages={348--354},
  year={2025},
  organization={IEEE}
}

@article{yang2025geometry,
  title={Geometry-Based Cooperative Conflict Resolution for Multi-UAV Combining Heading and Speed Control},
  author={Yang, Jian and Zhang, Kaixin and Zhong, Qishen and Zhang, Lidong},
  journal={IEEE Transactions on Consumer Electronics},
  volume={71},
  number={1},
  pages={945--958},
  year={2025},
  publisher={IEEE}
}

@article{qin2023srl,
  title={SRL-ORCA: A socially aware multi-agent mapless navigation algorithm in complex dynamic scenes},
  author={Qin, Jianmin and Qin, Jiahu and Qiu, Jiaxin and Liu, Qingchen and Li, Man and Ma, Qichao},
  journal={IEEE Robotics and Automation Letters},
  volume={9},
  number={1},
  pages={143--150},
  year={2023},
  publisher={IEEE}
}

@inproceedings{arul2021v,
  title={V-rvo: Decentralized multi-agent collision avoidance using voronoi diagrams and reciprocal velocity obstacles},
  author={Arul, Senthil Hariharan and Manocha, Dinesh},
  booktitle={2021 IEEE/RSJ International Conference on Intelligent Robots and Systems (IROS)},
  pages={8097--8104},
  year={2021},
  organization={IEEE}
}

@article{yang2023collision,
  title={Collision avoidance for autonomous vehicles based on MPC with adaptive APF},
  author={Yang, Hongjiu and He, Yongqi and Xu, Yang and Zhao, Hai},
  journal={IEEE Transactions on Intelligent Vehicles},
  volume={9},
  number={1},
  pages={1559--1570},
  year={2023},
  publisher={IEEE}
}

@article{qian2025collision,
  title={Collision Avoidance Control for Autonomous Driving with Multiple Dynamic Obstacles in IoV: A Prediction-Enhanced APF-Based Approach},
  author={Qian, Zenghui and Chen, Ruoyang and Yi, Changyan and Zhai, Xiangping and Chen, Bing},
  journal={IEEE Internet of Things Journal},
  volume={12},
  number={13},
  pages={24968--24984},
  year={2025},
  publisher={IEEE}
}

@article{han2025deep,
  title={A Deep Reinforcement Learning Method for Collision Avoidance with Dense Speed-Constrained Multi-UAV},
  author={Han, Jiale and Zhu, Yi and Yang, Jian},
  journal={IEEE Robotics and Automation Letters},
  volume={10},
  number={3},
  pages={2152--2159},
  year={2025},
  publisher={IEEE}
}

@article{kuo2025deep,
  title={Deep reinforcement learning--based collision avoidance strategy for multiple unmanned aerial vehicles},
  author={Kuo, Ping-Huan and Chen, Kuan-Lin and Lin, Yu-Sian and Chiu, Yu-Chih and Peng, Chao-Chung},
  journal={Engineering Applications of Artificial Intelligence},
  volume={160},
  pages={111862},
  year={2025},
  publisher={Elsevier}
}

@article{xiao2022sufficient,
  title={Sufficient conditions for feasibility of optimal control problems using control barrier functions},
  author={Xiao, Wei and Belta, Calin A and Cassandras, Christos G},
  journal={Automatica},
  volume={135},
  pages={109960},
  year={2022},
  publisher={Elsevier}
}

@article{isaly2024feasibility,
  title={On the feasibility and continuity of feedback controllers defined by multiple control barrier functions},
  author={Isaly, Axton and Ghanbarpour, Masoumeh and Sanfelice, Ricardo G and Dixon, Warren E},
  journal={IEEE Transactions on Automatic Control},
  volume={69},
  number={11},
  pages={7326--7339},
  year={2024},
  publisher={IEEE}
}

@article{ames2016control,
  title={Control barrier function based quadratic programs for safety critical systems},
  author={Ames, Aaron D and Xu, Xiangru and Grizzle, Jessy W and Tabuada, Paulo},
  journal={IEEE Transactions on Automatic Control},
  volume={62},
  number={8},
  pages={3861--3876},
  year={2016},
  publisher={IEEE}
}

@book{bertsimas1997introduction,
  title={Introduction to linear optimization},
  author={Bertsimas, Dimitris and Tsitsiklis, John N},
  volume={6},
  year={1997},
  publisher={Athena scientific Belmont, MA}
}
\bibliographystyle{IEEEtran}

\end{document}